
\documentclass[accepted]{article}

\usepackage{multirow}

\usepackage{mathtools}

\mathtoolsset{showonlyrefs=true}
\usepackage{amssymb}
\usepackage{amsmath}
\usepackage{amsthm}
\usepackage{amsfonts}
\usepackage{ifthen}
\usepackage{graphicx} 

\usepackage{tikz}
\usetikzlibrary{snakes,arrows,shapes}

\usepackage{arydshln}

\usepackage{hyperref}

\usepackage{natbib}
\definecolor{Blue}{rgb}{0,0.16,0.90}
\definecolor{Red}{rgb}{0.50,0.0,0}

\def\E{\mathbb{E}}
\def\Std{\mbox{Std}}
\def\R{\mathbb{R}}
\def\N{\mathbb{N}}
\def\ensconc{[1,\infty) \cup \{\infty\}}
\def\rmax{{R_{\max}}}
\def\vmax{{V_{\max}}}
\def\v*{v_{\pi_*}}
\def\Cpi{C_{\pi_*}}
\def\Ca{C^{(1,0)}}
\def\Cb{C^{(2,1,0)}}
\newcommand\Cam[1]{C^{(1,#1)}}
\newcommand\Cbm[1]{C^{(2,#1,0)}}
\newcommand\Cbbm[1]{C^{(2,#1,m)}}
\def\Cpia{C_{\pi_*}^{(1)}}

\def\greedy{{\mathcal G}}

\usepackage{algorithm}
\usepackage{algorithmic}

\newcommand{\labeldef}[1]{\label{definition:#1}}

\newcommand{\labelsec}[1]{\label{section:#1}}

\newcommand{\refequ}[1]{Equation~\eqref{#1}} 
\newcommand{\refalgo}[1]{\refequ{#1}} 
\newcommand{\refalgoshort}[1]{Eq.~\eqref{#1}} 

\newcommand{\refsec}[1]{Section~\ref{section:#1}}

\newtheorem{theorem}{Theorem}

\newtheorem{corollary}{Corollary}

\newtheorem{definition}{Definition}

\def\lp{\ell_p}

\def\PSDP{PSDP$_\infty$}
\def\NSPI{NSPI($m$)}

\def\titre{Approximate Policy Iteration Schemes: A Comparison}

\def\stretch{}

\usepackage{times}
\usepackage{graphicx} 
\usepackage{subfigure} 

\usepackage{natbib}

\usepackage{algorithm}
\usepackage{algorithmic}

\usepackage{hyperref}



\usepackage{icml2014} 

\icmltitlerunning{\titre}

\begin{document} 

\twocolumn[
\icmltitle{\titre}

\icmlauthor{Bruno Scherrer}{bruno.scherrer@inria.fr}
\icmladdress{Inria, Villers-l\`es-Nancy, F-54600, France\\Universit\'e de Lorraine, LORIA, UMR 7503, Vand{\oe}uvre-l\`es-Nancy, F-54506, France}

\icmlkeywords{Markov Decision Process, Approximate Dynamic Programming, Reinforcement Learning, Performance Bound, Time Complexity, Memory Complexity}

\vskip 0.3in
]

\begin{abstract} 
We consider the infinite-horizon discounted optimal control problem
formalized by Markov Decision Processes. We focus on several
approximate variations of the Policy Iteration algorithm: Approximate Policy Iteration (API)~\citep{ndp}, Conservative Policy Iteration
(CPI)~\citep{Kakade2002}, a natural
adaptation of the Policy Search by
Dynamic Programming algorithm~\citep{Bagnell2003} to the
infinite-horizon case (\PSDP), and the recently proposed
Non-Stationary Policy Iteration (\NSPI)~\citep{Scherrer:2012}. For all
algorithms, we describe performance bounds with respect the per-iteration error $\epsilon$, and
make a comparison by paying a particular attention to the
concentrability constants involved, the number of iterations and the
memory required. Our analysis highlights the following points: 1) The
performance guarantee of CPI can be arbitrarily better than that of
API, but this comes at the cost of a
relative---exponential in $\frac{1}{\epsilon}$---increase of the
number of iterations. 2) PSDP$_\infty$ enjoys the best of both worlds:
its performance guarantee is similar to that of CPI, but within a
number of iterations similar to that of API. 3) Contrary to API that
requires a constant memory, the memory needed by CPI and \PSDP{ }is
proportional to their number of iterations, which may be problematic
when the discount factor $\gamma$ is close to 1 or the
approximation error $\epsilon$ is close to $0$; we show that
the \NSPI{ }algorithm allows to make an overall trade-off between
memory and performance. Simulations with these schemes confirm our
analysis.

\end{abstract}

\section{Introduction}

We consider an infinite-horizon discounted Markov Decision Process (MDP)~\cite{puterman,ndp} $(\mathcal S, \mathcal
A, P, r,\gamma)$, where $\mathcal S$ is a possibly infinite state
space, $\mathcal A$ is a finite action space, $P(ds'|s,a)$, for all
$(s,a)$, is a probability kernel on $\mathcal S$, $r : \mathcal
S \to [-\rmax,\rmax]$ is a reward function bounded by $\rmax$, and $\gamma \in (0,1)$ is a discount factor. A
stationary deterministic policy $\pi:\mathcal S\to\mathcal A$ maps
states to actions. We write $P_\pi(ds'|s)=P(ds'|s,\pi(s))$ for the
stochastic kernel associated to policy $\pi$. The value $v_\pi$ of a
policy $\pi$ is a function mapping states to the expected discounted sum
of rewards received when following $\pi$ from these states: for all
$s\in\mathcal S$, 
$$
v_\pi(s) = \E\left[\sum_{t=0}^\infty \gamma^t
  r(s_t)\middle|s_0=s,~ s_{t+1}\sim
  P_\pi(\cdot|s_t)\right].
$$
The value $v_\pi$ is clearly
bounded by $V_{\mathrm{max}} = R_{\mathrm{max}}/(1-\gamma)$.
It is well-known that $v_\pi$ can be characterized as 
the unique fixed point of the linear Bellman operator
associated to a policy $\pi$: $T_\pi:v \mapsto r + \gamma P_\pi v$.
Similarly, the Bellman optimality operator $T:v \mapsto \max_\pi T_\pi v$ has as unique fixed point the optimal value $v_*=\max_\pi v_\pi$.  A policy $\pi$ is greedy w.r.t. a value
function $v$ if $T_\pi v = T v$, the set of such greedy policies is
written $\greedy{v}$. Finally, a policy $\pi_*$ is optimal, with value
$v_{\pi_*}=v_*$, iff $\pi_*\in\greedy{v_*}$, or equivalently
$T_{\pi_*}v_* = v_*$. 

The goal of this paper is to study and compare several approximate Policy Iteration
schemes.
In the literature, such schemes can be
seen as implementing an approximate greedy operator,
$\greedy_\epsilon$, that takes as input a distribution $\nu$ and a
function $v:S \rightarrow \R$ and returns a policy $\pi$ that is
$(\epsilon,\nu)$-approximately greedy with respect to $v$ in the sense
that:
\begin{equation}
\label{defgreedy}
\nu (T v - T_{\pi} v) = \nu (\max_{\pi'} T_{\pi'} v - T_{\pi} v) \le \epsilon.
\end{equation}
where for all $x$, $\nu x$ denotes $\E_{s\sim\nu}[x(s)]$.
In practice, this approximation of the greedy operator can be achieved
through a $\lp$-regression of the so-called \emph{Q-function}---the
state-action value function---(a direct regression is suggested by \citet{Kakade2002}, a fixed-point LSTD approach is used by~\citet{lagoudakis2003least}) or through a (cost-sensitive)
classification problem~\cite{Lagoudakis:2003b,Lazaric:2010}.
With this operator in hand, we shall describe several Policy Iteration
schemes in \refsec{algos}. Then \refsec{analysis} will provide
a detailed comparative analysis of their performance guarantees, time complexities,
and memory requirements. \refsec{experiments} will go on by providing
experiments that will illustrate their behavior, and confirm
our analysis. Finally, \refsec{conclusion} will conclude and present future work.

\section{Algorithms}
\labelsec{algos}
\paragraph{API}

We begin by describing the standard Approximate Policy Iteration (API)~\citep{ndp}. At each iteration $k$, the algorithm switches to the policy that is approximately greedy with respect to the value of the previous policy for some distribution $\nu$:
\begin{equation}
\label{api}
\pi_{k+1} \leftarrow \greedy_{\epsilon_{k+1}}(\nu, v_{\pi_{k}}).
\end{equation}
If there is no error ($\epsilon_k=0$) and $\nu$ assigns a positive weights to every state, it can easily be seen that this algorithm generates the same sequence of policies as exact Policy Iterations since from \refequ{defgreedy} the policies are exactly greedy. 

\paragraph{CPI/CPI($\alpha$)/API($\alpha$)}

We now turn to the description of Conservative Policy Iteration (CPI) proposed by \cite{Kakade2002}. At iteration $k$, CPI (described in \refalgo{cpi}) uses the distribution $d_{\pi_k,\nu}=(1-\gamma)\nu(I-\gamma P_{\pi_k})^{-1}$---the discounted cumulative occupancy measure induced by $\pi_k$ when starting from $\nu$---for calling the approximate greedy operator, and uses a stepsize $\alpha_k$ to generate a stochastic mixture of all the policies that are returned by the successive calls to the approximate greedy operator, which explains the adjective ``conservative'':
\begin{align}
\label{cpi}
\pi_{k+1}& \leftarrow (1-\alpha_{k+1})\pi_k + \alpha_{k+1} \greedy_{\epsilon_{k+1}}(d_{\pi_k,\nu},v_{\pi_k}) 
\end{align}
The stepsize $\alpha_{k+1}$ can be chosen in such a way that the above step leads to an improvement of the expected value of the policy given that the process is initialized according to the distribution $\nu$ \citep{Kakade2002}. The original article also describes a criterion for deciding whether to stop or to continue. Though the adaptive stepsize and the stopping condition allows to derive a nice analysis, they are in practice conservative: the stepsize $\alpha_k$ should be implemented with a line-search mechanism, or be fixed to some small value $\alpha$. We will refer to this latter variation of CPI as CPI($\alpha$).

It is natural to also consider the algorithm API($\alpha$) (mentioned by \citet{Lagoudakis:2003b}), a variation of API that is conservative like CPI($\alpha$) in the sense that it mixes the new policy with the previous ones with weights $\alpha$ and $1-\alpha$, but that directly uses the distribution $\nu$ in the approximate greedy step:
\begin{align}
\label{apialpha}
\pi_{k+1}& \leftarrow (1-\alpha)\pi_k + \alpha \greedy_{\epsilon_{k+1}}(\nu,v_{\pi_k}) 
\end{align}
Because it uses $\nu$ instead of $d_{\pi_k,\nu}$, API($\alpha$) is simpler to implement than CPI($\alpha$)\footnote{In practice, controlling the greedy step with respect to $d_{\pi_k,\nu}$ requires to generate samples from this very distribution. As explained by \citet{Kakade2002}, one such sample can be done by running one trajectory starting from $\nu$ and following $\pi_k$, stopping at each step with probability $1-\gamma$. In particular, one sample from $d_{\pi_k,\nu}$ requires on average $\frac{1}{1-\gamma}$ samples from the underlying MDP. With this respect, API($\alpha$) is much simpler to implement.}.

\paragraph{\PSDP}

We are now going to describe an algorithm that has a flavour similar
to API---in the sense that at each step it does a full step towards a
new deterministic policy---but also has a conservative flavour like CPI---in the
sense that the policies considered evolve more and more slowly.
This algorithm is a natural variation of the Policy Search by Dynamic
Programming algorithm (PSDP) of \citet{Bagnell2003}, originally proposed
to tackle finite-horizon problems, to the infinite-horizon case; we
thus refer to it as \PSDP. To the best of our knowledge however, this
variation has never been used in an infinite-horizon context.

The algorithm is based on finite-horizon non-stationary policies. Given a sequence of stationary deterministic policies $(\pi_k)$ that the algorithm will generate,  we will write
$\sigma_k=\pi_k \pi_{k-1}\ldots \pi_1$ the $k$-horizon policy that makes the
first action according to $\pi_k$, then the second action according to
$\pi_{k-1}$, etc. Its value is $v_{\sigma_k}=T_{\pi_k}T_{\pi_{k-1}} \ldots T_{\pi_1} r$. 
We will write $\varnothing$ the ``empty''
non-stationary policy. Note that $v_{\varnothing}=r$ and that any
infinite-horizon policy that begins with
$\sigma_k=\pi_{k} \pi_{k-1} \dots \pi_1$, which we will (somewhat abusively) denote
``$\sigma_k \dots$'' has a value $v_{\sigma_k \dots} \ge v_{\sigma_k}-\gamma^k \vmax$.
Starting from $\sigma_0=\varnothing$, the algorithm implicitely builds a sequence of non-stationary policies $(\sigma_k)$  by iteratively concatenating the policies that
are returned by the approximate greedy operator:
\begin{align}
\label{psdp}
\pi_{k+1} & \leftarrow \greedy_{\epsilon_{k+1}}(\nu, v_{\sigma_{k}}) 
\end{align}
While the standard PSDP algorithm of \citet{Bagnell2003} considers a horizon $T$ and makes $T$ iterations, the algorithm we consider here has an \emph{indefinite} number of iterations. 
The algorithm can be stopped at any step $k$. The theory that  we are about to describe suggests that one may return any policy that starts by the non-stationary policy $\sigma_{k}$. Since $\sigma_k$ is an approximately good finite-horizon policy, and as we consider an infinite-horizon problem, a natural output that one may want to use in practice is the infinite-horizon policy that loops over $\sigma_k$, that we shall denote $(\sigma_k)^\infty$.

From a practical point of view, \PSDP{ }and CPI need to store all the
(stationary deterministic) policies generated from the start. The memory required by the
algorithmic scheme is thus proportional to the number of iterations,
which may be prohibitive.
The aim of the next paragraph, that presents the last algorithm of this
article, is to describe a solution to this potential memory issue.

\paragraph{\NSPI}

We originally devised the algorithmic scheme of \refalgo{psdp} (\PSDP) as a simplified variation of
the \emph{Non-Stationary PI algorithm with a growing period} algorithm
(NSPI-growing) \cite{Scherrer:2012}\footnote{We
later realized that it was in fact a very natural variation of
PSDP. To "give Caesar his due and God his", we kept as the main
reference the older work and gave the name \PSDP.}. With
respect to \refalgo{psdp}, the only difference of NSPI-growing resides in the fact that 
the approximate greedy step is done with respect to the value
$v_{(\sigma_k)^\infty}$ of the policy that loops infinitely over
$\sigma_k$ (formally the algorithm does
$\pi_{k+1} \leftarrow \greedy_{\epsilon_{k+1}}(\nu,v_{(\sigma_{k})^\infty})$) 
instead of the value $v_{\sigma_k}$ of only the first $k$ steps here.
Following the intuition that when $k$ is big, these two values will be
close to each other, we ended up considering \PSDP{ }because it is
simpler.  
NSPI-growing suffers from the
same memory drawback as CPI and \PSDP. Interestingly, the work of \citet{Scherrer:2012} contains
another algorithm, \emph{Non-Stationary PI with a fixed period}
(\NSPI), that has a parameter that directly controls
the number of policies stored in memory.

Similarly to \PSDP, \NSPI{ }is based on non-stationary policies.  It
takes as an input a parameter $m$. It requires a set of $m$ initial deterministic stationary policies
$\pi_{m-1},\pi_{m-2},\dots,\pi_0$ and iteratively generates new
policies $\pi_1,\pi_2,\dots$. 
For any $k \ge 0$, we shall denote $\sigma_k^m$ the $m$-horizon non-stationary policy
that runs \emph{in reverse order} the last $m$ policies, which one may write formally:
$
\sigma_k^m = 
{\pi_k ~ \pi_{k-1} ~ \dots ~ \pi_{k-m+1}}
.
$
Also, we shall denote
$(\sigma_k^m)^\infty$ the $m$-periodic infinite-horizon non-stationary policy that loops over $\sigma_k^m$. Starting from $\sigma_0^m=\pi_0 \pi_1 \dots \pi_{m-1}$, the algorithm iterates as follows:
\begin{align}
\label{nspi}
\pi_{k+1} & \leftarrow \greedy_{\epsilon_{k+1}}(\nu, v_{(\sigma_k^m)^\infty})
\end{align}
Each iteration requires to compute an approximate greedy
policy $\pi_{k+1}$ with respect to the value $v_{(\sigma_k^m)^\infty}$ of $(\sigma_k^m)^\infty$,
that is the fixed point of the compound operator\footnote{Implementing this algorithm in practice can trivially be done through cost-sensitive classification in a way similar to \citet{Lazaric:2010}. It could also be done with a straight-forward extension of LSTD($\lambda$) to non-stationary policies.}:
$$
\forall v,~T_{k,m} v= T_{\pi_k} T_{\pi_{k-1}} \dots T_{\pi_{k-m+1}}v.
$$
When one goes from iterations $k$ to $k+1$, the process consists in adding $\pi_{k+1}$ at the front of the $(m-1)$-horizon policy 
$\pi_{k}\pi_{k-1}\dots \pi_{k-m+2}$, thus forming a new $m$-horizon policy $\sigma_{k+1}^m$. Doing so, we
forget about the oldest policy $\pi_{k-m+1}$ of $\sigma_k^m$ and keep
a constant memory of size $m$. At any step $k$, the algorithm can be stopped, and
the output is the policy $\pi_{k,m}=(\sigma_k^m)^\infty$ that loops on $\sigma_k^m$.
It is easy to see that \NSPI{ }reduces to API when
$m=1$. Furthermore, if we assume that the reward function is positive, add ``stop actions'' in every state of the model that
lead to a terminal absorbing state with a null reward, and
initialize with an infinite sequence of policies that only take
this ``stop action'', then \NSPI{ }with $m=\infty$ reduces to \PSDP.

\begin{table*}
\begin{center}
\begin{tabular}{|c||ccc|c|c||c|}
\hline 
 Algorithm  &  \multicolumn{3}{c|}{Performance Bound} & {$\#$ Iter.}& Memory & Reference  \\
\hline
\hline
{API (\refalgoshort{api})} & $\Cb$ & $\frac{1}{(1-\gamma)^2}$ & $\epsilon$  & \multirow{2}{*}{$ \frac{1}{1-\gamma}{\log{\frac 1 \epsilon}}$} & \multirow{2}{*}{$1$} & {\small \citep{Lazaric:2010}} \\
{\scriptsize($=$~NSPI(1))}&  {$\Ca$} & {$\frac{1}{(1-\gamma)^2}$} & {$\epsilon \log{\frac 1 \epsilon}$}    & & &  \\
\hdashline
{API($\alpha$) (\refalgoshort{apialpha}} 
& {$\Ca$} & {$ \frac{1}{(1-\gamma)^2} $} & {$ \epsilon$} &   \multicolumn{2}{c||}{$ \frac 1 {\alpha(1-\gamma)}  \log{\frac 1 \epsilon}$} &  \\
\hline
{CPI($\alpha$)} 
& {$\Ca$} & {$ \frac{1}{(1-\gamma)^3} $} & {$ \epsilon$} &   \multicolumn{2}{c||}{$ \frac 1 {\alpha(1-\gamma)}  \log{\frac 1 \epsilon}$} &  \\
\hdashline
\multirow{2}{*}{CPI (\refalgoshort{cpi})}  
& {$\Ca$} & {$\frac{1}{(1-\gamma)^3}$} & {$\epsilon \log{\frac 1 \epsilon}$} &  \multicolumn{2}{c||}{$\frac{1}{1-\gamma} \frac 1 \epsilon  \log{\frac 1 \epsilon} $} & \\
 & $\Cpi$ & $\frac{1}{(1-\gamma)^2}$ & $\epsilon $ & \multicolumn{2}{c||}{$\frac{\gamma}{\epsilon^2}$} & {\small \citep{Kakade2002}} \\ 
\hline
{\PSDP{ }(\refalgoshort{psdp})} & {$\Cpi$} & {$\frac{1}{(1-\gamma)^2}$} & {$\epsilon  \log{\frac 1 \epsilon}$}  & \multicolumn{2}{c||}{$ \frac{1}{1-\gamma}\log{\frac 1 \epsilon}$} & \\
{\scriptsize($\simeq$~NSPI($\infty$))}&  {$\Cpia$} & {$\frac{1}{1-\gamma}$} & {$\epsilon$}  & \multicolumn{2}{c||}{$ \frac{1}{1-\gamma}\log{\frac 1 \epsilon}$} & 
 \\
\hline
\multirow{4}{*}{\NSPI{ }(\refalgoshort{nspi})} 
 &  {$\Cbm{m}$} & {$\frac{1}{(1-\gamma)(1-\gamma^m)}$} & {$\epsilon$}  & {{$\frac{1}{1-\gamma} \log{\frac 1 \epsilon}$}} &  \multirow{4}{*}{$m$} & \multirow{4}{*}{}\\
&  {$\frac{\Ca}{m}$} & {$\frac{1}{(1-\gamma)^2(1-\gamma^m)}$} & {$\epsilon \log{\frac 1 \epsilon}$} &  {{$\frac{1}{1-\gamma} \log{\frac 1 \epsilon}$}} & & \\
& {{$\Cpia+ \gamma^m\frac{\Cbbm{m}}{1-\gamma^m} $}} & {$\frac{1}{1-\gamma}$} & {$\epsilon$}&   {{$\frac{1}{1-\gamma} \log{\frac 1 \epsilon}$}}& & \\
& {$\Cpi+\gamma^m \frac{\Cbm{m}}{m(1-\gamma^m)}$} & {$\frac{1}{(1-\gamma)^2}$} & {$\epsilon  \log{\frac 1 \epsilon}$} & {{$\frac{1}{1-\gamma} \log{\frac 1 \epsilon}$}}& & \\
\hline
\end{tabular}

\end{center}
\caption{\label{fig:comparison}{{\bf Upper bounds on the performance guarantees for the algorithms.} Except when references are given, the bounds are to our knowledge new. A comparison of API and CPI based on the two known bounds was done by \citet{Ghavam:2012}. The first bound of \NSPI{ }can be seen as an adaptation of that provided by \citet{Scherrer:2012}} for the more restrictive $\ell_\infty$-norm setting.}
\end{table*}

\section{Analysis}
\labelsec{analysis}

For all considered algorithms, we are going to describe bounds on the expected loss $E_{s \sim \mu}[\v*(s) - v_{\pi}(s)]=\mu(\v*-v_{\pi})$  of using the (possibly stochastic or non-stationary) policy $\pi$ ouput by the algorithms instead of the optimal policy $\pi_*$ from some initial distribution $\mu$ of interest as a function of an upper bound $\epsilon$ on all errors $(\epsilon_k)$. In order to derive these theoretical guarantees, we will first need to introduce a few concentrability coefficients that relate the distribution $\mu$ with which one wants to have a guarantee, and the distribution $\nu$ used by the algorithms\footnote{The expected loss corresponds to some weighted $\ell_1$-norm of the loss $\v*-v_{\pi}$. Relaxing the goal to controlling the weighted $\ell_p$-norm for some $p \ge 2$ allows to introduce some finer coefficients~\cite{FaMuSz10,ampi}. Due to lack of space, we do not consider this here.}.
\begin{definition}
\labeldef{conc}
Let $c(1), c(2),\ldots$ be the smallest coefficients in $\ensconc$ such that for all $i$ and all sets of deterministic stationary policies $\pi_1,\pi_2,\ldots, \pi_i$, $\mu P_{\pi_1} P_{\pi_2} \ldots P_{\pi_i} \le c(i) \nu$. For all $m,k$, we define the following coefficients in $\ensconc$:

~\vspace{-.7cm}
\begin{align}
\Cam{k} & = (1-\gamma)\sum_{i=0}^{\infty} \gamma^i c(i+k), \\
C^{(2,m,k)} & = (1-\gamma)(1-\gamma^m) \sum_{i=0}^{\infty} \sum_{j=0}^\infty \gamma^{i+jm} c(i+jm+k).
\end{align}
~\vspace{-1cm}

Similarly, let $c_{\pi_*}(1), c_{\pi_*}(2),\ldots$ be the smallest coefficients in $\ensconc$ such that for all $i$, $\mu (P_{\pi_*})^i \le c_{\pi_*}(i) \nu$. We define:

~\vspace{-.9cm}
\begin{align}
\Cpia & = (1-\gamma)\sum_{i=0}^{\infty} \gamma^i c_{\pi_*}(i).
\end{align}
Finally let $\Cpi$ be the smallest coefficient in $\ensconc$ such that $d_{\pi_*,\mu} =(1-\gamma)\mu (I-\gamma P_{\pi_*})^{-1} \le \Cpi\nu$.
\end{definition}
~\vspace{-.7cm}

With these notations in hand, our first contribution is to
provide a thorough comparison of all the algorithms. This is done in
Table~\ref{fig:comparison}. For each algorithm, we describe some
performance bounds and the required number of iterations and memory.
To make things clear, we only display the dependence with respect to
the concentrability constants, the discount factor $\gamma$, the quality
$\epsilon$ of the approximate greedy operator, and---if
applicable---the main parameters $\alpha$/$m$ of the algorithms.  For
API($\alpha$), CPI($\alpha$), CPI and \PSDP, the required memory
matches the number of iterations. All but two bounds are to
our knowledge original. The derivation of the new results are given in
Appendix~\ref{bounds}.

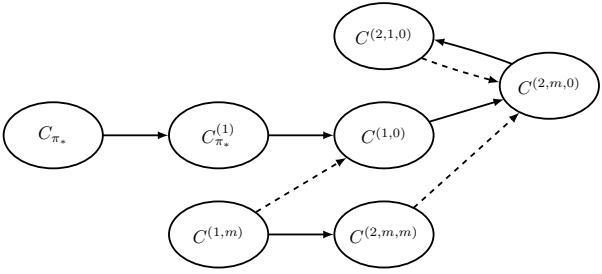
\begin{figure}
\begin{center}
\resizebox{8cm}{!}{\begin{tikzpicture}[>=latex,line join=bevel,]
  \pgfsetlinewidth{1bp}
\pgfsetcolor{black}
  \draw [->,dashed] (137.42bp,30.253bp) .. controls (149.39bp,37.434bp) and (164.7bp,46.621bp)  .. (186.66bp,59.795bp);
  \draw [->,dashed] (223.09bp,32.481bp) .. controls (236.95bp,44.959bp) and (257.17bp,63.15bp)  .. (280.75bp,84.371bp);
  \draw [->] (276.6bp,110.88bp) .. controls (266.95bp,114.64bp) and (255.12bp,118.49bp)  .. (234bp,124.14bp);
  \draw [->] (232.07bp,79.521bp) .. controls (241.51bp,82.353bp) and (252.42bp,85.625bp)  .. (272.38bp,91.613bp);
  \draw [->] (54.003bp,72bp) .. controls (62.028bp,72bp) and (70.967bp,72bp)  .. (89.705bp,72bp);
  \draw [->] (144bp,72bp) .. controls (152.03bp,72bp) and (160.97bp,72bp)  .. (179.71bp,72bp);
  \draw [->] (144bp,18bp) .. controls (152.03bp,18bp) and (160.97bp,18bp)  .. (179.71bp,18bp);
  \draw [->,dashed] (227.42bp,114.11bp) .. controls (237.03bp,110.37bp) and (248.8bp,106.54bp)  .. (269.83bp,100.9bp);
\begin{scope}
  \definecolor{strokecol}{rgb}{0.0,0.0,0.0};
  \pgfsetstrokecolor{strokecol}
  \draw (207bp,18bp) ellipse (27bp and 18bp);
  \draw (207bp,18bp) node {$C^{(2,m,m)}$};
\end{scope}
\begin{scope}
  \definecolor{strokecol}{rgb}{0.0,0.0,0.0};
  \pgfsetstrokecolor{strokecol}
  \draw (117bp,18bp) ellipse (27bp and 18bp);
  \draw (117bp,18bp) node {$C^{(1,m)}$};
\end{scope}
\begin{scope}
  \definecolor{strokecol}{rgb}{0.0,0.0,0.0};
  \pgfsetstrokecolor{strokecol}
  \draw (117bp,72bp) ellipse (27bp and 18bp);
  \draw (117bp,72bp) node {$C^{(1)}_{\pi_*}$};
\end{scope}
\begin{scope}
  \definecolor{strokecol}{rgb}{0.0,0.0,0.0};
  \pgfsetstrokecolor{strokecol}
  \draw (207bp,72bp) ellipse (27bp and 18bp);
  \draw (207bp,72bp) node {$C^{(1,0)}$};
\end{scope}
\begin{scope}
  \definecolor{strokecol}{rgb}{0.0,0.0,0.0};
  \pgfsetstrokecolor{strokecol}
  \draw (27bp,72bp) ellipse (27bp and 18bp);
  \draw (27bp,72bp) node {$C_{\pi_*}$};
\end{scope}
\begin{scope}
  \definecolor{strokecol}{rgb}{0.0,0.0,0.0};
  \pgfsetstrokecolor{strokecol}
  \draw (207bp,126bp) ellipse (27bp and 18bp);
  \draw (207bp,126bp) node {$C^{(2,1,0)}$};
\end{scope}
\begin{scope}
  \definecolor{strokecol}{rgb}{0.0,0.0,0.0};
  \pgfsetstrokecolor{strokecol}
  \draw (297bp,99bp) ellipse (27bp and 18bp);
  \draw (297bp,99bp) node {$C^{(2,m,0)}$};
\end{scope}
\end{tikzpicture}}
\end{center}
\caption{\label{fig:constants}{\bf Hierarchy of the concentrability constants.} A constant $A$ is better than a constant $B$---see the text for details---if $A$ is a parent of $B$ on the above graph. The best constant is $\Cpi$.}
\end{figure}

Our second contribution, that is complementary with the
comparative list of bounds, is that we can show that there exists a
hierarchy among the constants that appear in all the bounds of
Table~\ref{fig:comparison}. In the directed graph of
Figure~\ref{fig:constants}, a constant $B$ is a descendent of
$A$ \emph{if and only if} the implication $\{B<\infty \Rightarrow
A<\infty\}$ holds\footnote{Dotted arrows are used to underline the fact
that the comparison of coefficients is restricted to the case where the parameter $m$ is
finite.}. The ``if and only if'' is important here: it means that if
$A$ is a parent of $B$, and $B$ is not a parent of $A$, then there
exists an MDP for which $A$ is finite while $B$ is infinite; in other words, an algorithm that has a guarantee with respect to $A$ has a guarantee that can be arbitrarily better than that with constant $B$. Thus, the
overall best concentrability constant is $\Cpi$, while the worst are
$\Cb$ and $\Cbm{m}$. To make the picture complete, we should add that
for any MDP and any distribution $\mu$, it is possible to find an
input distribution $\nu$ for the algorithm (recall that the
concentrability coefficients depend on $\nu$ and $\mu$) such that
$\Cpi$ is finite, though it is not the case for $\Cpia$ (and as a
consequence all the other coefficients). The
derivation of this order relations is done in Appendix~\ref{coefcompar}.

The standard API algorithm has guarantees expressed in terms of $\Cb$
and $\Ca$ only. Since CPI's analysis can be done with respect to
$\Cpi$, it has a performance guarantee that can be arbitrarily better
than that of API, though the opposite is not true. This, however,
comes at the cost of an exponential increase of time
complexity since CPI may require a number of iterations that scales in
$O\left(\frac{1}{\epsilon^2}\right)$, while the guarantee of API only requires
$O\left(\log \frac 1 \epsilon \right)$ iterations. When the analysis
of CPI is relaxed so that the performance guarantee is expressed in
terms of the (worse) coefficient $\Ca$ (obtained also for API), we can slightly improve
the rate---to $\tilde O\left(\frac{1}{\epsilon}\right)$---, though it is still
exponentially slower than that of API. This second result for CPI was
proved with a technique that was also used for CPI($\alpha$) and
API($\alpha$). We  conjecture that it can be improved for CPI($\alpha$), that
should be as good as CPI when $\alpha$ is sufficiently small.

\PSDP{ }enjoys two guarantees that have a fast rate like those of API.
One bound has a better dependency with respect to
$\frac{1}{1-\gamma}$, but is expressed in terms of the worse
coefficient $\Cpia$.  The second guarantee is almost as good as that
of CPI since it only contains an extra $\log\frac 1 \epsilon$ term,
but it has the nice property that it holds quickly with respect to~$\epsilon$: in time $O (\log \frac 1 \epsilon)$ instead of $O(\frac 1
{\epsilon^2})$, that is exponentially faster. \PSDP{ }is thus
theoretically better than both CPI (as good but faster) and API (better
and as fast).

\def\taille{0.95}

\newcommand{\showcurve}[3]{
\begin{figure*}
\begin{center}
\includegraphics[width=\taille\textwidth]{#1.pdf}
\end{center}
\caption{#2 \label{#3}}
\end{figure*}
}

\showcurve{All}{{\bf Statistics for all instances.} 
The MDPs $(M_i)_{1 \le i \le 30}$ are i.i.d. with the same distribution as
  $M_1$. Conditioned on some MDP $M_i$ and some algorithm, the error
  measures at all iteration $k$ are i.i.d. with the same distribution
  as $L_{1,k}$. The central line of the learning curves gives the
  empirical estimate of the overall average error
  $(\E[L_{1,k}])_k$. The three grey regions (from dark to light grey)
  are estimates of respectively the variability (across MDPs) of the
  average error $(\Std[\E[L_{1,k}|M_1]])_k$, the average (across MDPs)
  of the standard deviation of the error $(\E[\Std[L_{1,k}|M_1]])_k$,
  and the variability (across MDPs) of the standard deviation of the
  error $(\Std[\Std[L_{1,k}|M_1]])_k$.  For ease of comparison, all
  curves are displayed with the same $x$ and $y$ range. }{expall}

Now, from a practical point of view, \PSDP{ }and CPI need to store all
the policies generated from the start. The memory required by these
algorithms is thus proportional to the number of iterations.  Even
if \PSDP{ }may require much fewer iterations than CPI, the
corresponding memory requirement may still be prohibitive in
situations where $\epsilon$ is small or $\gamma$ is close to $1$.  We
explained that \NSPI{ }can be seen as making a bridge between API
and \PSDP. Since (i) both have a nice time complexity, (ii) API has
the best memory requirement, and (iii) \NSPI{ } has the best
performance guarantee, \NSPI{ }is a
good candidate for making a standard performance/memory trade-off.  If the
first two bounds of \NSPI{ }in Table~\ref{fig:comparison} extends
those of API, the other two are made of two terms: the left terms are
identical to those obtained for \PSDP, while the two possible right
terms are new, but are controlled by $\gamma^m$,
which can thus be made arbitrarily small by increasing the memory
parameter $m$. Our analysis thus confirms our intuition that \NSPI{ }
allows to make a performance/memory trade-off in between API (small
memory) and \PSDP{ }(best performance). In other words, as soon as
memory becomes a constraint, \NSPI{ }is the natural alternative to \PSDP.

\stretch
\section{Experiments}
\stretch

\labelsec{experiments}

In this section, we present some experiments in order to illustrate
the empirical behavior of the different algorithms discussed in the paper.  We considered the standard API
as a baseline.  CPI, as it is described by \citet{Kakade2002}, is very slow
(in one sample experiment on a 100 state problem, it made very slow
progress and took several millions of iterations before it stopped)
and we did not evaluate it further. Instead, we considered two
variations: CPI+ that is identical to CPI except that it chooses the
step $\alpha_k$ at each iteration by doing a line-search towards the
policy output by the greedy operator\footnote{We implemented a crude
  line-search mechanism, that looks on the set $2^i \alpha$ where
  $\alpha$ is the minimal step estimated by CPI to ensure
  improvement.}, and CPI($\alpha$) with $\alpha=0.1$, that makes
``relatively but not too small'' steps at each iteration. To assess
the utility for CPI to use the distribution $d_{\nu,\pi}$ for the
approximate greedy step, we also considered API($\alpha$) with
$\alpha=0.1$, the variation of API described in \refalgo{apialpha}
that makes small steps, and that only differs from CPI($\alpha$) by
the fact that the approximate greedy step uses the distribution $\nu$
instead of $d_{\pi_k,\nu}$.  In addition to these algorithms, we
considered \PSDP{ }and \NSPI{ }for the values $m \in \{5,10,30\}$.

In order to assess their quality, we consider finite problems where
the exact value function can be computed. More precisely, we consider
Garnet problems first introduced by \citet{Archibald:95}, which are a
class of randomly constructed finite MDPs. They do not correspond to
any specific application, but remain
representative of the kind of MDP that might be encountered in
practice. In brief,
we consider Garnet problems with $|{\cal S}| \in \{50,100,200\}$, $|{\cal A}|
\in \{2,5,10\}$ and branching factors in $\{1,2,10\}$. The greedy step
used by all algorithms is approximated by an exact greedy operator
applied to a noisy orthogonal projection on a linear space of
dimension $\frac{|{\cal S}|}{10}$ with respect to the quadratic norm weighted
by $\nu$ or $d_{\nu,\pi}$ (for CPI+ and CPI($\alpha$)) where $\nu$ is uniform. 

For each of these $3^3=27$ parameter instances, we generated 30
i.i.d. Garnet MDPs $(M_i)_{1 \le i \le 30}$.  For each such MDP $M_i$,
we ran API, API(0.1), CPI+, CPI(0.1), \NSPI{ }for $m \in \{5,10,30\}$
and \PSDP{ }$30$ times. For each run $j$ and algorithm, we compute for
all iterations $k \in (1,100) $ the performance,
i.e. the loss $L_{j,k}=\mu(\v*-v_{\pi_k})$ with respect to the optimal
policy.  Figure~\ref{expall} displays statistics about
these random variables.  For each algorithm, we display a learning
curve with confidence regions that account for the variability across runs and problems.  
The supplementary material contains statistics that are respectively conditioned on the values of $n_S$, $n_A$ and $b$, which gives some insight
on the influence of these parameters.

From these experiments and statistics, we can make a series of
observations.
The standard API scheme is much more variable than the other
algorithms and tends to provide the worst performance on average.
CPI+ and CPI($\alpha$) display about the same asymptotic performance
on average. If CPI($\alpha$) has slightly less variability, it is much
slower than CPI+, that always converges in very few iterations (most
of the time less than 10, and always less than 20).
API($\alpha$)---the naive conservative variation of API that is also
simpler than CPI($\alpha$)---is empirically close to CPI($\alpha$),
while being on average slightly worse.
CPI+, CPI($\alpha$) and \PSDP{ }have a similar average performance,
but the variability of \PSDP{ }is significantly smaller. \PSDP{ }is the
algorithm that overall gives the best results.
\NSPI{ }does indeed provide a bridge between API and \PSDP. By increasing $m$, the behavior
gets closer to that of \PSDP. With $m=30$, \NSPI{ }is overall better
than API($\alpha$), CPI+, and CPI($\alpha$), and close to \PSDP.
The above relative observations are stable with respect to the
number of states $n_S$ and actions $n_A$. Interestingly, the differences between the
algorithms tend to vanish when the dynamics of the problem gets more
and more stochastic (when the branching factor increases). This complies with our analysis based on concentrability coefficients: there are all finite
when the dynamics mixes a lot, and their relative difference are the
biggest in deterministic instances.

\stretch
\section{Discussion, Summary and Future Work}
\stretch
\labelsec{conclusion}

We have considered several variations of the Policy
Iteration schemes for infinite-horizon problems: API, CPI, \NSPI, API($\alpha$) and \PSDP\footnote{We recall that to our knowledge, the use of \PSDP{ }(PSDP in an \emph{infinite-horizon} context) is not documented in the literature.}.
We have in particular explained the fact---to our knowledge so far
unknown---that the recently introduced \NSPI{ }algorithm
generalizes API (that is obtained when $m$=1) and \PSDP{ }(that is
very similar when $m=\infty$). Figure~\ref{fig:comparison}
synthesized the theoretical guarantees about these
algorithms. Most of the bounds are to our knowledge
new.

One of the first important message of our work is that what is usually
hidden in the constants of the performance bounds does matter. The
constants involved in the bounds for API, CPI, \PSDP{ }and for the
main (left) terms of \NSPI{ }can be sorted from the worst to the best
as follows: $\Cb, \Ca, \Cpia, \Cpi$.  A detailed hierarchy of all
constants was depicted in Figure~\ref{fig:constants}.  This is to our
knowledge the first time that such an in-depth comparison of the
bounds is done, and our hierarchy of constants has
interesting implications that go beyond the Policy Iteration schemes
we have been focusing on in this paper.  As a matter of fact, several
other dynamic programming algorithms, namely AVI~\citep{Munos_SIAM07},
$\lambda$PI~\cite{lpi}, AMPI~\citep{ampi}, come with guarantees
involving the worst constant $\Cb$, which suggests that they should
not be competitive with the best algorithms we have described here.

At the purely technical level, several of our bounds come in pair; this
is due to the fact that we have introduced a new proof technique.
This led to a new bound for API, that improves the state of the art in the sense that it involves the constant
$\Ca$  instead of $\Cb$.  It also enabled us to derive new
bounds for CPI (and its natural algorithmic variant CPI($\alpha$))
that is worse in terms of guarantee but has a better time complexity
($\tilde O(\frac{1}{\epsilon})$ instead of $O(\frac{1}{\epsilon^2})$).
We believe this new technique may be helpful in the future for the
analysis of other MDP algorithms.

Let us sum up the main insights of our analysis.  1) The guarantee for
CPI can be arbitrarily stronger than that of API/API($\alpha$),
because it is expressed with respect to the best concentrability
constant $\Cpi$, but this comes at the cost of a
relative---exponential in $\frac{1}{\epsilon}$---increase of the
number of iterations. 2) PSDP$_\infty$ enjoys the best of both worlds:
its performance guarantee is similar to that of CPI, but within a
number of iterations similar to that of API. 3) Contrary to API that
requires a constant memory, the memory needed by CPI and \PSDP{ }is
proportional to their number of iterations, which may be problematic
in particular when the discount factor $\gamma$ is close to $1$ or the
approximation error $\epsilon$ is close to $0$; we showed that
the \NSPI{ }algorithm allows to make an overall trade-off between
memory and performance.

The main assumption of this work is that all algorithms have at disposal an
$\epsilon$-approximate greedy operator. It may be unreasonable to compare all
algorithms on this basis, since the underlying optimization problems
may have different complexities: for instance, methods
like CPI look in a space of stochastic policies while API moves in a
space of deterministic policies. Digging and understanding in more
depth what is potentially hidden in the term $\epsilon$---as we have
done here for the concentrability constants---constitutes a very
natural research direction.

Last but not least, we have run 
numerical experiments that support our worst-case analysis. 
On simulations on about $800$ Garnet MDPs with various
characteristics, CPI($\alpha$), CPI+ (CPI with a crude line-search
mechanism), \PSDP{ }and \NSPI{ }were shown to always perform
significantly better than the standard API. CPI+, CPI($\alpha$)
and \PSDP{ }performed similarly on average, but \PSDP{ }showed much
less variability and is thus the best algorithm in terms of overall
performance. Finally, \NSPI{ }allows to make a bridge between
API and \PSDP, reaching an overall performance close to that of \PSDP{ }
with a controlled memory.
Implementing other instances of these algorithmic schemes, running and
analyzing experiments on bigger domains constitutes interesting future
work.

\appendix
\section{Proofs for Table~\ref{fig:comparison}}
\label{bounds}

\paragraph{\underline{\PSDP:}}
For all $k$, we have
\begin{align}
v_{\pi_*} - v_{\sigma_{k}} &
=~ T_{\pi_*} v_{\pi_*}- T_{\pi_*}v_{\sigma_{k-1}} + T_{\pi_*}v_{\sigma_{k-1}} - T_{\pi_k}v_{\sigma_{k-1}} \\
&
\le~ \gamma P_{\pi_*} (v_\pi-v_{\sigma_{k-1}}) + e_k \label{eq0}
\end{align}
where we defined $e_k=\max_{\pi'}T_{\pi'}v_{\sigma_{k-1}} - T_{\pi_k}v_{\sigma_{k-1}}$.
As $P_{\pi_*}$ is non negative, we deduce by induction:
\begin{align}
v_{\pi_*}- v_{\sigma_k} \le~ \sum_{i=0}^{k-1} (\gamma P_{\pi_*})^i e_{k-i} + \gamma^k \vmax. 
\end{align}
By multiplying both sides by $\mu$, using the definition of the coefficients $c_{\pi_*}(i)$ and the fact that $\nu e_j \le \epsilon_j \le \epsilon$, we get:
\begin{align}
\mu(v_{\pi_*}- v_{\sigma_k}) &
\le~ \sum_{i=0}^{k-1} \mu (\gamma P_{\pi_*})^i e_{k-i} + \gamma^k \vmax \label{eq1} \\
& 
\le~ \sum_{i=0}^{k-1} \gamma^i c_{\pi_*}(i) \epsilon_{k-i} + \gamma^k \vmax \\
&
\le~ \left(\sum_{i=0}^{k-1} \gamma^i c_{\pi_*}(i)\right) \epsilon + \gamma^k \vmax.
\end{align}
The bound with respect to $\Cpia$ is obtained by using the fact that $v_{\sigma_k \dots} \ge v_{\sigma_k} - \gamma^k \vmax$ and taking $k\ge \left \lceil \frac{\log{\frac{2\vmax}{\epsilon}}}{1-\gamma} \right \rceil$.
Starting back in \refequ{eq1} and using the definition of $\Cpi$ (in particular the fact that for all $i$, $\mu(\gamma P_{\pi_*})^i \le \frac{1}{1-\gamma}d_{\pi^*,\mu} \le \frac{\Cpi}{1-\gamma}\nu$) and the fact that $\nu e_j \le \epsilon_j$, we get:
\begin{align}
\mu(v_{\pi_*}- v_{\sigma_k}) &
\le \sum_{i=0}^{k-1} \mu (\gamma P_{\pi_*})^i e_{k-i} + \gamma^k \vmax \\
&
\le \frac{\Cpi}{1-\gamma}\sum_{i=1}^{k}\epsilon_i + \gamma^k \vmax
\end{align}
and the other bound is obtained by using the fact that $v_{\sigma_k \dots} \ge v_{\sigma_k} - \gamma^k \vmax$, $\sum_{i=1}^{k}\epsilon_i \le k \epsilon$, and considering the number of iterations 
$k=\left \lceil \frac{\log{\frac{2\vmax}{\epsilon}}}{1-\gamma} \right \rceil$.

\paragraph{\underline{API/\NSPI:}} 
API is identical to NSPI(1), and its bounds are particular cases of the first two bounds for \NSPI, so we only consider \NSPI.
By following the proof technique of \citet{Scherrer:2012}, writing $\Gamma_{k,m}=(\gamma P_{\pi_k}) (\gamma P_{\pi_{k-1}}) \cdots (\gamma P_{\pi_{k-m+1}})$ and $e_{k+1}=\max_{\pi'}T_{\pi'}v_{\pi_{k,m}} - T_{\pi_{k+1}}v_{\pi_{k,m}}$, one can show that:
\begin{align}
v_{\pi_*}-v_{\pi_{k,m}} &\le \sum_{i=0}^{k-1} (\gamma P_{\pi_*})^i (I-\Gamma_{k-i,m})^{-1} e_{k-i}   + \gamma^k \vmax.
\end{align}
Multiplying both sides by $\mu$ (and observing that $e_k \ge 0$) and the fact that $\nu e_j \le \epsilon_j \le \epsilon$, we obtain:
\begin{align}
&\mu(v_{\pi_*}-v_{\pi_{k}})\\
& \le \sum_{i=0}^{k-1} \mu (\gamma P_{\pi_*})^i(I-\Gamma_{k-i,m})^{-1} e_{k-i}  + \gamma^k \vmax \label{nspi:eq4}\\
& \le  \sum_{i=0}^{k-1} \left( \sum_{j=0}^\infty \gamma^{i+jm} c(i+jm) \epsilon_{k-i} \right) + \gamma^k \vmax  \label{nspi:eq3} \\
& \le  \sum_{i=0}^{k-1} \sum_{j=0}^\infty \gamma^{i+jm} c(i+jm) \epsilon + \gamma^k \vmax, \label{nspi:eq5}
\end{align}
which leads to the first bound by taking $k\ge \left \lceil \frac{\log{\frac{2\vmax}{\epsilon}}}{1-\gamma} \right \rceil$.
Starting back on \refequ{nspi:eq3}, assuming for simplicity  that  $\epsilon_{-k}=0$ for all $k \ge 0$, we get:
{\small
\begin{align}
&\mu(v_{\pi_*}-v_{\pi_{k}})-\gamma^k \vmax\\
 & \le \sum_{l=0}^{\left\lceil \frac{k-1}{m}\right\rceil}\sum_{h=0}^{m-1}  \sum_{j=0}^\infty  \gamma^{h+(l+j)m} c(h+(l+j)m) \epsilon_{k-h-lm}  \\ 
& \le \sum_{l=0}^{\left\lceil \frac{k-1}{m}\right\rceil}\sum_{h=0}^{m-1}  \sum_{j=l}^\infty  \gamma^{h+jm} c(h+jm) \max_{k-(l+1)m+1 \le p \le k-lm} \epsilon_p \\ 
& \le \sum_{l=0}^{\left\lceil \frac{k-1}{m}\right\rceil}\sum_{h=0}^{m-1}  \sum_{j=0}^\infty  \gamma^{h+jm} c(h+jm) \max_{k-(l+1)m+1 \le p \le k-lm} \epsilon_p
\end{align}
\begin{align}
& = \left( \sum_{h=0}^{m-1}  \sum_{j=0}^\infty  \gamma^{h+jm} c(h+jm) \right)  \sum_{l=0}^{\left\lceil \frac{k-1}{m}\right\rceil}\max_{l-(l+1)m+1 \le p \le k-lm} \epsilon_p  \\
& \le \left( \sum_{i=0}^{\infty}   \gamma^{i} c(i) \right) {\left\lceil\frac{k-1}{m}\right\rceil} \epsilon, \label{nspi:eq6}
\end{align}}
which leads to the second bound by taking $k = \left \lceil \frac{\log{\frac{2\vmax}{\epsilon}}}{1-\gamma} \right \rceil$. Last but not least, starting back on \refequ{nspi:eq4}, and using the fact that $(I-\Gamma_{k-i,m})^{-1}=I+\Gamma_{k-i,m}(I-\Gamma_{k-i,m})^{-1}$ we see that:
\begin{align}
& \mu(v_{\pi_*}-v_{\pi_{k}}) - \gamma^k \vmax ~\le~ \sum_{i=0}^{k-1} \mu (\gamma P_{\pi_*})^i e_{k-i} ~+ \\
& + \sum_{i=0}^{k-1} \mu (\gamma P_{\pi_*})^i\Gamma_{k-i,m}(I-\Gamma_{k-i,m})^{-1} e_{k-i}.
\end{align}
The first term of the r.h.s. can be bounded exactly as for \PSDP. For the second term, we have:
\begin{align}
&\sum_{i=0}^{k-1} \mu (\gamma P_{\pi_*})^i\Gamma_{k-i,m}(I-\Gamma_{k-i,m})^{-1} e_{k-i} \\
& \le \sum_{i=0}^{k-1} \sum_{j=1}^\infty \gamma^{i+jm} c(i+jm) \epsilon_{k-i} \\
& = \gamma^m \sum_{i=0}^{k-1} \sum_{j=0}^\infty \gamma^{i+jm} c(i+(j+1)m) \epsilon_{k-i},
\end{align}
and we follow the same lines as above (from \refequ{nspi:eq3} to Equations~\eqref{nspi:eq5} and~\eqref{nspi:eq6}) to conclude.

\paragraph{\underline{CPI, CPI($\alpha$), API($\alpha$):}}
Conservative steps are addressed by a tedious generalization of the proof for API by \citet{munos2003}. Due to lack of space, the proof is deferred to the Supplementary Material.

\section{Proofs for Figure~\ref{fig:constants}}
\label{coefcompar}

We here provide details on the order relation for the concentrability coefficients.

\paragraph{\underline{$\Cpi \rightarrow \Cpia$}:}
 (i) We have $\Cpi \le \Cpia$ because
\begin{align}
d_{\pi_*,\mu} &=  (1-\gamma) \mu (I-\gamma P_{\pi_*})^{-1} 
 = (1-\gamma) \sum_{i=0}^\infty \gamma^i \mu(P_{\pi_*})^i \\
& \le (1-\gamma) \sum_{i=0}^\infty \gamma^i c_{\pi_*}(i) \nu = \Cpia \nu
\end{align}
and  $\Cpi$ is the smallest coefficient $C$ satisfying $d_{\pi_*,\mu} \le C \nu$.
 (ii) We may have $\Cpi<\infty$ and $\Cpia=\infty$ by designing a MDP on $\N$ where $\pi_*$ induces a deterministic transition from state $i$ to state $i+1$.

\paragraph{\underline{$\Cpia \rightarrow \Ca$:}}
(i) We have $\Cpia \le \Ca$ because for all $i$,  $c_{\pi_*}(i) \le c(i)$. 
(ii) It is easy to obtain  $\Cpia<\infty$ and $\Ca=\infty$ since $\Cpia$ only depends on \emph{one} policy while $\Cpia$ depends on \emph{all} policies.

\paragraph{\underline{$\Ca \rightarrow \Cbm{m}$ and $\Cam{m} \rightarrow \Cbbm{m}$:}}
(i) $\Cam{m} \le \frac{1}{1-\gamma^m} \Cbbm{m}$ holds because
{\small \begin{align}
\frac{\Cam{m}}{1-\gamma} = \sum_{i=0}^\infty \gamma^i c(i+m) 
& \le \sum_{i=0}^\infty \sum_{j=0}^\infty \gamma^{i+jm} c(i+(j+1)m)\\
& =\frac{1}{(1-\gamma)(1-\gamma^m)}\Cbbm{m}.
\end{align}}
(ii) One may have $\Cam{m}<\infty$ and $\Cbbm{m}=\infty$ when $c(i) = \Theta(\frac{1}{i^2 \gamma^i})$, since the generic term of $\Cam{m}$ is $\Theta(\frac 1 {i^2})$ (the sum converges) while that of $\Cbbm{m}$ is $\Theta(\frac 1 {i})$ (the sum diverges).
The reasoning is similar for the other relation.

\paragraph{\underline{$\Cam{m}\rightarrow \Ca$ and $\Cbbm{m}\rightarrow \Cbm{m}$:}}
We here assume that $m<\infty$.
(i) We have $\Cam{m} \le \frac{1}{\gamma^m}\Ca$ and $\Cbbm{m} \le \frac{1}{\gamma^m}\Cbm{m}$.
(ii) It suffices that $c(j)=\infty$ for some $j<m$ to have $\Cbm{m}=\infty$ while $\Cbbm{m}<\infty$, or to have $\Ca=\infty$ while $\Cam{m}<\infty$.

\paragraph{\underline{$\Cb \leftrightarrow \Cbm{m}$:}}
(i) We clearly have $\Cbm{m} \le \frac{1-\gamma^m}{1-\gamma} \Cb$. (ii) $\Cbm{m}$ can be rewritten as follows:
$$
\Cbm{m}=(1-\gamma)(1-\gamma^m) \sum_{i=0}^{\infty} \left(1+\left\lfloor \frac {i}m\right\rfloor\right)\gamma^{i} c(i). 
$$
Then, using the fact that $1+\left\lfloor \frac {i}m \right\rfloor \ge \max\left(1,\frac{i}{m}\right)$, we have
\begin{align}
& \frac{1-\gamma}{1-\gamma^m}\Cbm{m}  ~\ge~ \sum_{i=0}^{\infty} \max\left(1,\frac{i}{m}\right) \gamma^{i} c(i)  \\
& ~\ge~ \sum_{i=0}^{m-1} \gamma^{i} c(i) +  \sum_{i=m}^{\infty} \frac{i}{m} \gamma^{i} c(i) \\
&~\ge~ \sum_{i=0}^{m-1} \gamma^{i} c(i) +  \frac{m}{m+1} \sum_{i=m}^{\infty} \frac{i+1}{m} \gamma^{i} c(i) \\
&~=~ \sum_{i=0}^{m-1} \gamma^{i} c(i) + \frac{m}{m+1} \left(\Cb-\sum_{i=0}^{m-1} \gamma^{i} c(i)\right) \\
& ~=~ \frac{m}{m+1}\Cb+\frac{1}{m+1}\sum_{i=0}^{m-1} \gamma^{i} c(i).
\end{align}
Thus, when $m$ is finite, $\Cbm{m}<\infty \Rightarrow \Cb<\infty$.

\bibliographystyle{icml2014}
\bibliography{biblio}

\newpage

\onecolumn

\begin{center}
\Large{Supplementary Material}
\end{center}

\section{Proof for CPI, CPI($\alpha$), API($\alpha$)}

We begin by proving the following result:
\begin{theorem}
At each iteration $k < k^*$ of CPI (\refalgo{cpi}), the expected loss satisfies:
\begin{align}
\mu (v_{\pi_*}-v_{\pi_k}) & \le \frac{\Ca}{(1-\gamma)^2}\sum_{i=1}^k \alpha_i \epsilon_i + e^{\left\{(1-\gamma)\sum_{i=1}^k \alpha_i\right\}} \vmax.
\end{align}
\end{theorem}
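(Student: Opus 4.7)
The plan is to establish a one-step recursion
\begin{align}
L_{k+1}\le \bigl(1-(1-\gamma)\alpha_{k+1}\bigr)L_k + \tfrac{\Ca}{(1-\gamma)^2}\alpha_{k+1}\epsilon_{k+1},
\end{align}
where $L_k:=\mu(v_{\pi_*}-v_{\pi_k})$, and then to unroll it. The unrolling is elementary: iterating gives $L_k\le\prod_{i=1}^k(1-(1-\gamma)\alpha_i)\,L_0 + \tfrac{\Ca}{(1-\gamma)^2}\sum_{i=1}^k\alpha_i\epsilon_i$ after bounding each partial product in the forcing sum by $1$, and the combination $L_0\le \vmax$ together with the standard inequality $\prod_i(1-(1-\gamma)\alpha_i)\le \exp\bigl(-(1-\gamma)\sum_i\alpha_i\bigr)$ then produces the stated bound.

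To derive the recursion, set $\alpha=\alpha_{k+1}$ and $\pi'=\greedy_{\epsilon_{k+1}}(d_{\pi_k,\nu},v_{\pi_k})$, so that $\pi_{k+1}=(1-\alpha)\pi_k+\alpha\pi'$. Affineness of the Bellman operator in the policy, combined with the fixed-point relation $T_{\pi_k}v_{\pi_k}=v_{\pi_k}$, gives the convenient identity $T_{\pi_{k+1}}v_{\pi_k}-v_{\pi_k}=\alpha(T_{\pi'}v_{\pi_k}-v_{\pi_k})$, and hence the one-step improvement formula
\begin{align}
v_{\pi_{k+1}}-v_{\pi_k}=\alpha\,(I-\gamma P_{\pi_{k+1}})^{-1}(T_{\pi'}v_{\pi_k}-v_{\pi_k}).
\end{align}
The pointwise inequality $T_{\pi'}v_{\pi_k}\ge T_{\pi_*}v_{\pi_k}-(Tv_{\pi_k}-T_{\pi'}v_{\pi_k})$ then isolates an "oracle step along $\pi_*$" from the non-negative greediness defect; together with the identity $T_{\pi_*}v_{\pi_k}-v_{\pi_k}=(I-\gamma P_{\pi_*})(v_{\pi_*}-v_{\pi_k})$ and integration against $\mu$, this yields an upper bound on $L_{k+1}$ of the form $L_k$ minus an oracle-progress term plus a greediness-error term.

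The main obstacle is to control these two operator-weighted expectations using only the concentrability constant $\Ca$. For the error term $\alpha\,\mu(I-\gamma P_{\pi_{k+1}})^{-1}(Tv_{\pi_k}-T_{\pi'}v_{\pi_k})$, the plan is to expand the resolvent as the Neumann series $\sum_i\gamma^i P_{\pi_{k+1}}^i$, apply the concentrability estimate $\mu P_{\pi_{k+1}}^i\le c(i)\nu$ to factor out $\tfrac{\Ca}{1-\gamma}\nu$, then convert $\nu$ into $d_{\pi_k,\nu}$ via $\nu\le \tfrac{1}{1-\gamma}d_{\pi_k,\nu}$ (valid on non-negative integrands, since $(1-\gamma)\nu$ is the leading term of the series defining $d_{\pi_k,\nu}$), and finally invoke the approximate-greediness bound $d_{\pi_k,\nu}(Tv_{\pi_k}-T_{\pi'}v_{\pi_k})\le\epsilon_{k+1}$; the two factors of $(1-\gamma)^{-1}$ combine with $\Ca$ into the announced $\tfrac{\Ca}{(1-\gamma)^2}$. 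For the oracle-progress term, I would exploit the partial cancellation between $(I-\gamma P_{\pi_{k+1}})^{-1}$ and the $(I-\gamma P_{\pi_*})$ factor appearing in $T_{\pi_*}v_{\pi_k}-v_{\pi_k}$, and apply further $c(i)$-style bounds, to extract the contraction factor $(1-\gamma)\alpha L_k$. Combining the two bounds delivers the recursion and hence, after unrolling, the theorem.
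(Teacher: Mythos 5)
Your treatment of the greediness-error term is essentially the paper's: Neumann expansion of the resolvent, the coefficients $c(i)$, the comparison $\nu \le \frac{1}{1-\gamma}d_{\pi_k,\nu}$ on non-negative integrands, and the defect bound under $d_{\pi_k,\nu}$ do combine into $\frac{\Ca}{(1-\gamma)^2}\alpha_{k+1}\epsilon_{k+1}$, and the unrolling of your recursion, if it held, would give the theorem. The gap is in the oracle-progress term, i.e.\ in the very existence of the scalar recursion $L_{k+1}\le(1-(1-\gamma)\alpha_{k+1})L_k+\frac{\Ca}{(1-\gamma)^2}\alpha_{k+1}\epsilon_{k+1}$. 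With $g_k:=v_{\pi_*}-v_{\pi_k}\ge 0$, your decomposition requires
\begin{equation}
\mu\,(I-\gamma P_{\pi_{k+1}})^{-1}(I-\gamma P_{\pi_*})\,g_k \;\ge\; (1-\gamma)\,\mu g_k ,
\end{equation}
and this is false in general. Writing $(I-\gamma P_{\pi_{k+1}})^{-1}(I-\gamma P_{\pi_*})=I+\gamma(I-\gamma P_{\pi_{k+1}})^{-1}(P_{\pi_{k+1}}-P_{\pi_*})$, the left-hand side is $\mu g_k$ plus a term that is bounded below only by $-\frac{\gamma}{1-\gamma}\max_s g_k(s)$; if $\mu$ sits on states where $g_k$ is small while $P_{\pi_*}$ transports mass to states where $g_k$ is large, the left-hand side can be negative while the right-hand side is positive. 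The $c(i)$-style bounds you invoke cannot repair this: they are \emph{upper} bounds $\mu P_{\pi_1}\cdots P_{\pi_i}\le c(i)\nu$ and so can only upper-bound non-negative error terms, never lower-bound a progress term. Note also that with $\epsilon_i=0$ your recursion would assert geometric decay of the $\mu$-loss of exact conservative PI at rate $1-(1-\gamma)\alpha$, which is stronger than what is known: CPI's guaranteed per-step improvement is proportional to the advantage under the \emph{current} occupancy measure and can be $o(\alpha)$ even when $\mu g_k$ is large --- this is precisely why the $\Cpi$-based analysis of CPI needs $O(1/\epsilon^2)$ iterations.

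The paper avoids the problem by never collapsing to a scalar before the final step. It proves the componentwise recursion $v_{\pi_*}-v_{\pi_{k+1}}\le Q_{k+1}(v_{\pi_*}-v_{\pi_k})+\alpha_{k+1}(I-\gamma P_{\pi_{k+1}})^{-1}e_{k+1}$ with the non-negative matrix $Q_{k+1}=(1-\alpha_{k+1})I+\alpha_{k+1}\gamma P_{\pi_*}$, unrolls it as a vector inequality, bounds the homogeneous part against the \emph{constant} vector $\vmax\mathbf{1}$ (for which transport of mass by $P_{\pi_*}$ is harmless, yielding exactly $\delta_k\vmax$ with $\delta_k=\prod_i(1-(1-\gamma)\alpha_i)$ --- note this gives $e^{-(1-\gamma)\sum\alpha_i}\vmax$, not $e^{-(1-\gamma)\sum\alpha_i}L_0$), and only then applies $\mu$, expanding each product of the $Q_j$ into powers of $\gamma P_{\pi_*}$ so that every term $\mu(\gamma P_{\pi_*})^j(\gamma P_{\pi_{k-i}})^l$ is absorbed into $\gamma^{j+l}c(j+l)\nu$. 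To repair your argument you would need to carry this vector-valued recursion (or an equivalent object) through the induction rather than the scalar $L_k$.
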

\begin{proof}
Using the facts that $T_{\pi_{k+1}} v_{\pi_k} = (1-\alpha_{k+1})v_{\pi_k}+\alpha_{k+1} T_{\pi_{k+1}} v_{\pi_k}$ and the notation $e_{k+1}=\max_{\pi'}T_{\pi'} v_{\pi_k} - T_{\pi'_{k+1}} v_{\pi_k}$, we have:
\begin{align}
v_{\pi_*}- v_{\pi_{k+1}} &= v_{\pi_*}- T_{\pi_{k+1}} v_{\pi_k} + T_{\pi_{k+1}} v_{\pi_k} - T_{\pi_{k+1}} v_{\pi_{k+1}} \\
& = v_{\pi_*}- (1-\alpha_{k+1})v_{\pi_k} - \alpha_{k+1} T_{\pi'_{k+1}} v_{\pi_k} + \gamma P_{\pi_{k+1}} (v_{\pi_k}-v_{\pi_{k+1}}) \\
& = (1-\alpha_{k+1}) (v_{\pi_*}- v_{\pi_k}) + \alpha_{k+1} (T_{\pi_*}v_{\pi_*}-  T_{\pi_*}v_{\pi_k}) +  \alpha_{k+1} (T_{\pi_*}v_{\pi_k} -  T_{\pi'_{k+1}} v_{\pi_k}) + \gamma P_{\pi_{k+1}} (v_{\pi_k}-v_{\pi_{k+1}}) \\
& \le \left[ (1-\alpha_{k+1})I+\alpha_{k+1}\gamma P_{\pi_*} \right](v_\pi-v_{\pi_k}) + \alpha_{k+1} e_{k+1} + \gamma P_{\pi_{k+1}} (v_{\pi_k}-v_{\pi_{k+1}}). \label{cpi:eq00}
\end{align}
Using the fact that $v_{\pi_{k+1}}=(I-\gamma P_{\pi_{k+1}})^{-1}r$, and the fact that $(I-\gamma P_{\pi_{k+1}})^{-1}$ is non-negative, we can see that
\begin{align}
v_{\pi_k}-v_{\pi_{k+1}} & = (I-\gamma P_{\pi_{k+1}})^{-1} (v_{\pi_k}-\gamma P_{\pi_{k+1}}v_{\pi_k}-r) \\
& = (I-\gamma P_{\pi_{k+1}})^{-1} (T_{\pi_k} v_{\pi_k}-T_{\pi_{k+1}}v_{\pi_k}) \\
& \le (I-\gamma P_{\pi_{k+1}})^{-1} \alpha_{k+1} e_{k+1}.
\end{align}
Putting this back in \refequ{cpi:eq00}, we obtain:
\begin{align}
v_{\pi_*}- v_{\pi_{k+1}} \le \left[ (1-\alpha_{k+1})I+\alpha_{k+1}\gamma P_{\pi_*} \right](v_\pi-v_{\pi_k}) + \alpha_{k+1} (I-\gamma P_{\pi_{k+1}})^{-1} e_{k+1}.
\end{align}
Define the matrix $Q_k=\left[ (1-\alpha_{k})I+\alpha_{k}\gamma P_{\pi_*} \right]$, the set ${\cal N}_{i,k}=\{j; k-i+1 \le j \le k\}$ (this set contains exactly $i$ elements), the matrix $R_{i,k}=\prod_{j \in {\cal N}_{i,k}} Q_j$, and the coefficients $\beta_k=1-\alpha_k(1-\gamma)$ and $\delta_k=\prod_{i=1}^k \beta_k$. By repeatedly using the fact that the matrices $Q_k$ are non-negative, we get by induction
\begin{align}
v_{\pi_*}- v_{\pi_{k}} \le \sum_{i=0}^{k-1} R_{i,k}  \alpha_{k-i} (I-\gamma P_{\pi_{k-i}})^{-1} e_{k-i} + \delta_k \vmax. \label{cpi:eq0}
\end{align}
Let ${\cal P}_j({\cal N}_{i,k})$ be the set of subsets of  ${\cal N}_{i,k}$ of size $j$. With this notation we have
\begin{align}
R_{i,k} = \sum_{j=0}^{i} \sum_{I \in {\cal P}_j({\cal N}_{i,k})} \zeta_{I,i,k} (\gamma P_{\pi_*})^j
\end{align}
where for all subset $I$ of ${\cal N}_{i,k}$, we wrote 
\begin{align}
\zeta_{I,i,k}=\left(\prod_{n \in I} \alpha_n \right) \left(\prod_{n \in {\cal N}_{i,k} \backslash I}(1-\alpha_n)\right).
\end{align}
Therefore, by multiplying \refequ{cpi:eq0} by $\mu$, using the definition of the coefficients $c(i)$, and the facts that $\nu \le (1-\gamma)d_{\nu,\pi_{k+1}}$, we obtain:
\begin{align}
\mu(v_{\pi_*} - v_{\pi_{k}}) & \le \frac{1}{1-\gamma} \sum_{i=0}^{k-1} \sum_{j=0}^{i} \sum_{l=0}^{\infty} \sum_{I \in {\cal P}_j({\cal N}_{i,k})} \zeta_{I,i,k} \gamma^{j+l}c(j+l) \alpha_{k-i} \epsilon_{k-i} + \delta_k \vmax. \\
& = \frac{1}{1-\gamma}  \sum_{i=0}^{k-1} \sum_{j=0}^{i} \sum_{l=j}^{\infty} \sum_{I \in {\cal P}_j({\cal N}_{i,k})} \zeta_{I,i,k} \gamma^{l}c(l) \alpha_{k-i} \epsilon_{k-i} + \delta_k \vmax \\
& \le \frac{1}{1-\gamma} \sum_{i=0}^{k-1} \sum_{j=0}^{i} \sum_{l=0}^{\infty} \sum_{I \in {\cal P}_j({\cal N}_{i,k})} \zeta_{I,i,k} \gamma^{l}c(l) \alpha_{k-i} \epsilon_{k-i} + \delta_k \vmax \\
 &= \frac{1}{1-\gamma} \left( \sum_{l=0}^{\infty}\gamma^{l} c(l) \right) \sum_{i=0}^{k-1} \left( \sum_{j=0}^{i} \sum_{I \in {\cal P}_j({\cal N}_{i,k})} \zeta_{I,i,k} \right) \alpha_{k-i} \epsilon_{k-i} + \delta_k \vmax \\
& = \frac{1}{1-\gamma} \left( \sum_{l=0}^{\infty}\gamma^{l} c(l) \right) \sum_{i=0}^{k-1} \left( \prod_{j \in  {\cal N}_{i,k}} (1-\alpha_j+\alpha_j) \right) \alpha_{k-i} \epsilon_{k-i} + \delta_k \vmax \\
& = \frac{1}{1-\gamma}  \left( \sum_{l=0}^{\infty}\gamma^{l} c(l) \right) \left( \sum_{i=0}^{k-1} \alpha_{k-i} \epsilon_{k-i} \right)  + \delta_k \vmax.
\end{align}

Now, using the fact that for $x \in (0,1)$, $\log(1-x) \le -x$, we can observe that
\begin{align}
\log \delta_k &= \log \prod_{i=1}^k \beta_i 
 = \sum_{i=1}^k \log \beta_i 
 = \sum_{i=1}^k \log(1-\alpha_i(1-\gamma)) 
 \le -(1-\gamma)\sum_{i=1}^k \alpha_i.
\end{align}
As a consequence, we get $\delta_k \le e^{-(1-\gamma)\sum_{i=1}^k \alpha_i}$.
\end{proof}

In the analysis of CPI,  \citet{Kakade2002} show that the learning steps that ensure the nice performance guarantee of CPI satisfy $\alpha_k \ge \frac{(1-\gamma)\epsilon}{12 \gamma \vmax}$, the right term $e^{\left\{(1-\gamma)\sum_{i=1}^k \alpha_i\right\}}$ above tends $0$ exponentially fast, and we get the following corollary that shows that CPI has a performance bound with the coefficient $\Ca$ of API in a number of iterations $O\left(\frac{\log \frac 1 \epsilon}{\epsilon}\right)$.
\begin{corollary}
The smallest (random) iteration $k^{\dag}$ such that $\frac{\log \frac{\vmax}{\epsilon} }{1-\gamma} \le \sum_{i=1}^{k^{\dag}} \alpha_i \le \frac{\log \frac{\vmax}{\epsilon} }{1-\gamma}+1$ is such that $ k^{\dag} \le \frac{12 \gamma \vmax \log \frac{\vmax}{\epsilon} }{\epsilon(1-\gamma)^2}$ and the policy $\pi_{k^{\dag}}$ satisfies:
\begin{align}
\mu (v_{\pi_*}-v_{\pi_{k^{\dag}}}) & \le \left( \frac{\Ca\left(\sum_{i=1}^{k^{\dag}} \alpha_i \right) }{(1-\gamma)^2} +1 \right)  \epsilon \le \left(\frac{\Ca \left(\log \frac{\vmax}{\epsilon}  +1 \right)}{(1-\gamma)^3}+1 \right) \epsilon.
\end{align}
\end{corollary}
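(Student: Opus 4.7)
The plan is to combine the preceding theorem with two elementary bookkeeping observations: a lower bound on the step-sizes that forces the partial sums $S_k := \sum_{i=1}^{k} \alpha_i$ to grow at least linearly, and the defining two-sided bound on $S_{k^{\dag}}$ that makes the two terms of the theorem balance.

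First I would establish the bound on $k^{\dag}$. As recalled in the paragraph immediately preceding the corollary, \citet{Kakade2002} show that the adaptive step-sizes chosen by CPI satisfy $\alpha_k \ge \frac{(1-\gamma)\epsilon}{12\gamma \vmax}$, so after $k$ iterations $S_k \ge k \cdot \frac{(1-\gamma)\epsilon}{12\gamma \vmax}$. Setting the right-hand side equal to $\frac{\log(\vmax/\epsilon)}{1-\gamma}$ shows that as soon as $k \ge \frac{12\gamma\vmax \log(\vmax/\epsilon)}{\epsilon(1-\gamma)^2}$, the sum $S_k$ has crossed the target threshold $\frac{\log(\vmax/\epsilon)}{1-\gamma}$. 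Moreover, since each $\alpha_i \le 1$, when $S_k$ first crosses the threshold it overshoots by at most $1$, so the smallest iteration $k^{\dag}$ for which $\frac{\log(\vmax/\epsilon)}{1-\gamma} \le S_{k^{\dag}} \le \frac{\log(\vmax/\epsilon)}{1-\gamma} + 1$ is well defined and satisfies $k^{\dag} \le \frac{12\gamma\vmax \log(\vmax/\epsilon)}{\epsilon(1-\gamma)^2}$.

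Next I would plug $k = k^{\dag}$ into the bound from the previous theorem. Using $\epsilon_i \le \epsilon$ uniformly and the fact that $\delta_{k^{\dag}} \le e^{-(1-\gamma) S_{k^{\dag}}}$ (as established at the end of the theorem's proof), we obtain
\begin{align}
\mu(v_{\pi_*} - v_{\pi_{k^{\dag}}}) \le \frac{\Ca\, S_{k^{\dag}}}{(1-\gamma)^2}\, \epsilon + e^{-(1-\gamma) S_{k^{\dag}}}\, \vmax.
\end{align}
The lower half of the definition of $k^{\dag}$ gives $(1-\gamma) S_{k^{\dag}} \ge \log(\vmax/\epsilon)$, so the exponential term is at most $e^{-\log(\vmax/\epsilon)} \vmax = \epsilon$, which yields the first inequality in the corollary. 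For the second inequality, I would use the upper half $S_{k^{\dag}} \le \frac{\log(\vmax/\epsilon)}{1-\gamma} + 1$ together with $\frac{1}{1-\gamma} \ge 1$ to write $S_{k^{\dag}} \le \frac{\log(\vmax/\epsilon) + 1}{1-\gamma}$, and substitute.

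There is no real technical obstacle here since all the heavy lifting was done in the preceding theorem; the only delicate ingredient is the lower bound $\alpha_k \ge \frac{(1-\gamma)\epsilon}{12\gamma\vmax}$, which is imported from \citet{Kakade2002} rather than reproved. The remainder is bookkeeping: verifying that $k^{\dag}$ is well defined (using $\alpha_i \le 1$ so the sum cannot jump over an interval of length one), and observing that the two-sided definition of $k^{\dag}$ is exactly tuned so that the two summands of the theorem both reduce to quantities proportional to $\epsilon$.
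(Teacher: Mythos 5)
Your proposal is correct and follows exactly the route the paper intends: the corollary is stated as an immediate consequence of the preceding theorem together with the step-size lower bound $\alpha_k \ge \frac{(1-\gamma)\epsilon}{12\gamma\vmax}$ imported from \citet{Kakade2002}, and your bookkeeping (linear growth of $\sum_i\alpha_i$ to bound $k^{\dag}$, the lower half of the two-sided condition to reduce the exponential term to $\epsilon$, the upper half plus $\frac{1}{1-\gamma}\ge 1$ for the second inequality) is precisely the omitted computation. You even silently correct the paper's sign typo in the exponential term of the theorem, using $e^{-(1-\gamma)\sum_i\alpha_i}$ as established at the end of its proof.
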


Since the proof is based on a generalization of the analysis of API and thus does not use any of the specific properties of CPI, it turns out that the results we have just given can straightforwardly be specialized
to CPI($\alpha$).
\begin{corollary}
Assume we run CPI($\alpha$) for some $\alpha \in (0,1)$, that is CPI (\refalgo{cpi}) with $\alpha_k=\alpha$ for all $k$. 
\begin{align}
\mbox{If }k = \left \lceil \frac{\log{\frac{ \vmax}{ \epsilon}}}{\alpha(1-\gamma)} \right \rceil,~~~\mbox{ then }& \mu (\v*-v_{\pi_k}) \le  \frac{\alpha(k+1)\Ca}{(1-\gamma)^2}\epsilon \le \left(\frac{\Ca \left(\log \frac{\vmax}{\epsilon}  +1 \right)}{(1-\gamma)^3}+1 \right) \epsilon.
\end{align}
\end{corollary}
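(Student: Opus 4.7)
The plan is to apply the preceding Theorem directly, specialized to the constant step-size and uniform error case $\alpha_k = \alpha$, $\epsilon_k \le \epsilon$. Since $\sum_{i=1}^k \alpha_i = k\alpha$, the theorem instantly yields
\begin{equation*}
\mu(v_{\pi_*} - v_{\pi_k}) \;\le\; \frac{k\alpha\,\Ca}{(1-\gamma)^2}\,\epsilon \;+\; e^{-k\alpha(1-\gamma)}\,\vmax,
\end{equation*}
where the sign in the exponent is the one actually derived in the theorem's proof via $\delta_k \le e^{-(1-\gamma)\sum_i \alpha_i}$, so that the residual term decays as $k$ grows.

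Next I would choose $k = \lceil \log(\vmax/\epsilon)/(\alpha(1-\gamma))\rceil$, namely the smallest integer making $k\alpha(1-\gamma) \ge \log(\vmax/\epsilon)$, which forces $e^{-k\alpha(1-\gamma)}\vmax \le \epsilon$. The bound then collapses to $\frac{k\alpha\Ca}{(1-\gamma)^2}\epsilon + \epsilon$. The announced intermediate form $\frac{\alpha(k+1)\Ca}{(1-\gamma)^2}\epsilon$ follows by absorbing the additive $\epsilon$ into the linear-in-$k$ term: this amounts to using $1 \le \frac{\alpha\Ca}{(1-\gamma)^2}$, which holds in the nontrivial regime (if not, the bound already reduces directly to the final form without needing the intermediate step, since $\Ca \ge 1$).

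For the second inequality I would substitute the ceiling bound $k \le \log(\vmax/\epsilon)/(\alpha(1-\gamma)) + 1$ into $\alpha(k+1)$: the $\alpha$ factor cancels the $\alpha$ in the denominator, producing the $\log(\vmax/\epsilon)/(1-\gamma)$ piece, which after division by the remaining $(1-\gamma)^2$ gives the advertised $\frac{\Ca \log(\vmax/\epsilon)}{(1-\gamma)^3}\epsilon$. The leftover $O(\alpha)$ residue is bounded by $\alpha \le 1$ together with $1/(1-\gamma) \ge 1$ and folded into the ``$+1$'' inside the logarithm and the trailing additive $\epsilon$. The only real obstacle is purely constant-bookkeeping; because the previous theorem was deliberately written for arbitrary sequences $(\alpha_i, \epsilon_i)$, no new probabilistic or dynamic-programming estimate is required --- the corollary is, in both spirit and letter, a direct specialization.
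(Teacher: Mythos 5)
Your proposal is correct and follows essentially the same route as the paper, which offers no separate argument for this corollary beyond asserting it as a direct specialization of the preceding theorem with $\alpha_i=\alpha$, $\epsilon_i\le\epsilon$ and $\sum_i\alpha_i=k\alpha$. You also rightly flag the two small wrinkles that the paper glosses over: the exponent in the theorem's statement should be negative (as its proof of $\delta_k\le e^{-(1-\gamma)\sum_i\alpha_i}$ shows), and the intermediate bound $\frac{\alpha(k+1)\Ca}{(1-\gamma)^2}\epsilon$ only dominates the residual $\epsilon$ when $\alpha\Ca\ge(1-\gamma)^2$, whereas the final bound with the trailing $+1$ holds unconditionally via $k\alpha\le\frac{\log(\vmax/\epsilon)}{1-\gamma}+\alpha$ and $\alpha\le 1\le\frac{1}{1-\gamma}$.
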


The above bound for CPI($\alpha$) involves the factor $\frac{1}{(1-\gamma)^3}$. A precise examination of the proof shows that this amplification is due to the fact that the approximate greedy operator uses the distribution  $d_{\pi_{k},\nu} \ge (1-\gamma)\nu$ instead of $\nu$ (for API). In fact, using a very similar proof, it is easy to show that API($\alpha$) satisfies the following result.
\begin{corollary}
Assume API($\alpha$) is run for some $\alpha \in (0,1)$.
\begin{align}
\mbox{If }k = \left \lceil \frac{\log{\frac{ \vmax}{ \epsilon}}}{\alpha(1-\gamma)} \right \rceil,~~~\mbox{ then }& \mu (\v*-v_{\pi_k}) \le  \frac{\alpha(k+1)\Ca}{(1-\gamma)}\epsilon \le \left(\frac{\Ca \left(\log \frac{\vmax}{\epsilon}  +1 \right)}{(1-\gamma)^2}+1 \right) \epsilon.
\end{align}
\end{corollary}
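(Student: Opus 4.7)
The plan is to replay the CPI($\alpha$) argument almost verbatim, exploiting the single structural difference: the approximate greedy operator in API($\alpha$) is invoked on the distribution $\nu$ itself, not on $d_{\pi_k,\nu}$. The whole algebraic chain that produces the inductive bound
\begin{equation}
v_{\pi_*}-v_{\pi_k} \;\le\; \sum_{i=0}^{k-1} R_{i,k}\,\alpha\,(I-\gamma P_{\pi_{k-i}})^{-1} e_{k-i} + \delta_k \vmax,
\end{equation}
with $R_{i,k}=\prod_{j\in\mathcal{N}_{i,k}}\bigl((1-\alpha)I+\alpha\gamma P_{\pi_*}\bigr)$ and $\delta_k=(1-\alpha(1-\gamma))^k$, only relies on the mixture update $\pi_{k+1}=(1-\alpha)\pi_k+\alpha\pi'_{k+1}$, the identity $T_{\pi_{k+1}}v_{\pi_k}=(1-\alpha)v_{\pi_k}+\alpha T_{\pi'_{k+1}}v_{\pi_k}$, and the standard control $v_{\pi_k}-v_{\pi_{k+1}}\le \alpha(I-\gamma P_{\pi_{k+1}})^{-1}e_{k+1}$. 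None of these steps depends on the distribution fed to $\greedy_{\epsilon_{k+1}}$, so I would simply lift them from the CPI proof.

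The only place where the input distribution actually matters is after multiplying by $\mu$, expanding $R_{i,k}$ as a convex combination of $(\gamma P_{\pi_*})^j$'s, expanding $(I-\gamma P_{\pi_{k-i}})^{-1}=\sum_{l\ge 0}(\gamma P_{\pi_{k-i}})^l$, and invoking $\mu(\gamma P_{\pi_*})^j(\gamma P_{\pi_{k-i}})^l \le \gamma^{j+l} c(j+l)\,\nu$. The residual scalar is $\nu e_{k-i}$. In the CPI proof this is controlled by $\epsilon_{k-i}/(1-\gamma)$ via the detour $\nu\le (1-\gamma)^{-1}d_{\pi_{k-i},\nu}$; here the greedy step is called with $\nu$ directly, so $\nu e_{k-i}\le\epsilon_{k-i}$ and the extraneous $(1-\gamma)^{-1}$ disappears. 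After the same combinatorial collapse $\sum_{j}\sum_{I\in\mathcal{P}_j(\mathcal{N}_{i,k})}\zeta_{I,i,k}=1$ and the identity $\sum_{l\ge 0}\gamma^l c(l)=\Ca/(1-\gamma)$, I obtain
\begin{equation}
\mu(v_{\pi_*}-v_{\pi_k})\;\le\; \frac{\Ca}{1-\gamma}\sum_{i=1}^{k} \alpha\,\epsilon_i +\delta_k\vmax \;\le\; \frac{\alpha k\,\Ca}{1-\gamma}\epsilon+\delta_k\vmax,
\end{equation}
which is precisely the API($\alpha$) counterpart of the CPI($\alpha$) estimate, cheaper by exactly one factor of $1-\gamma$.

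To finish, the elementary bound $\log(1-\alpha(1-\gamma))\le -\alpha(1-\gamma)$ yields $\delta_k\le e^{-k\alpha(1-\gamma)}$, so the choice $k=\lceil \log(\vmax/\epsilon)/(\alpha(1-\gamma))\rceil$ forces $\delta_k\vmax\le\epsilon$. Absorbing this residual into the leading term by using $\Ca\ge 1$ (so that $\epsilon$ is dominated by one more copy of $\alpha\Ca\epsilon/(1-\gamma)$, up to the same mild constant adjustment that appears in the CPI($\alpha$) corollary) produces the first inequality $\frac{\alpha(k+1)\Ca}{1-\gamma}\epsilon$; substituting the explicit value of $k$ and bounding $\alpha(k+1)$ by $\log(\vmax/\epsilon)/(1-\gamma)+O(1)$ produces the second. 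I expect no genuine obstacle: every combinatorial manipulation has already been performed for CPI($\alpha$) and carries over unchanged, and the only quantitative gain---the missing $1-\gamma$---is forced on us by the change of input distribution and so requires no new idea.
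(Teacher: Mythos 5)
Your proposal is correct and follows exactly the route the paper intends: it replays the CPI($\alpha$) analysis verbatim and observes that the sole distribution-dependent step, bounding $\nu e_{k-i}$, yields $\epsilon_{k-i}$ directly instead of $\epsilon_{k-i}/(1-\gamma)$ because API($\alpha$) calls the greedy operator with $\nu$ rather than $d_{\pi_k,\nu}\ge(1-\gamma)\nu$, which is precisely the remark the paper makes before stating this corollary. The only blemish --- absorbing the residual $\delta_k\vmax\le\epsilon$ into the $(k+1)$ factor, which strictly requires $\alpha \Ca\ge 1-\gamma$ --- is inherited from the paper's own CPI($\alpha$) corollary and you flag it appropriately.
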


\newcommand{\showcurvec}[8]{
\begin{figure}[ht!]
\begin{center}
\includegraphics[width=\taille\textwidth]{#1.pdf}
\end{center}
\begin{center}
\includegraphics[width=\taille\textwidth]{#3.pdf}
\end{center}
\begin{center}
\includegraphics[width=\taille\textwidth]{#5.pdf}
\end{center}
\caption{#7 Top: #2. Middle: #4. Bottom #6.\label{#8}}
\end{figure}
}

\section{More details on the Numerical Simulations}

\showcurvec{S50}{$n_S=50$}{S100}{$n_S=100$}{S200}{$n_S=200$}{Statistics conditioned on the number of states.}{exps}

\showcurvec{A2}{$n_A=2$}{A5}{$n_A=5$}{A10}{$n_a=10$}{Statistics conditioned on the number of actions.}{expa}

\showcurvec{B1}{$b=1$ (deterministic)}{B2}{$b=2$}{B10}{$b=10$}{Statistics conditioned on the branching factor.}{expb}

\paragraph{Domain and Approximations}

In our experiments, a Garnet is
parameterized by 4 parameters and is written $G(n_S, n_A, b, p)$:
$n_S$ is the number of states, $n_A$ is the number of actions, $b$ is
a branching factor specifying how many possible next states are
possible for each state-action pair ($b$ states are chosen uniformly
at random and transition probabilities are set by sampling uniform
random $b-1$ cut points between 0 and 1) and $p$ is the number of
features (for linear function approximation). The reward is
state-dependent: for a given randomly generated Garnet problem, the
reward for each state is uniformly sampled between 0 and 1. Features
are chosen randomly: $\Phi$ is a $n_S\times p$ feature matrix of which
each component is randomly and uniformly sampled between 0 and 1. The
discount factor $\gamma$ is set to $0.99$ in all experiments.

All the algorithms we have discussed in the paper need to repeatedly
compute $\greedy_\epsilon(\rho,v)$ for some distribution $\rho=\nu$ or
$\rho=d_{\pi,\nu}$. In other words, they must be able to make calls to
an approximate greedy operator applied to the value $v$ of some policy
for some distribution $\rho$. To implement this operator, we compute a
noisy estimate of the value $v$ with a uniform white noise $u(\iota)$
of amplitude $\iota$, then projects this estimate onto the space
spanned by $\Phi$ with respect to the $\rho$-quadratic norm
(projection that we write $\Pi_{\Phi,\rho}$), and then applies the
(exact) greedy operator on this projected estimate. In a nutshell, one
call to the approximate greedy operator $\greedy_\epsilon(\rho,v)$
amounts to compute $\greedy
\Pi_{\Phi,\rho}(v+u(\iota))$.

\paragraph{Simulations}

We have run series of experiments, in which we callibrated the
perturbations (noise, approximations) so that the algorithm are
significantly perturbed but no too much (we do not want their behavior
to become too erratic). After trial and error, we ended up considering
the following setting. We used Garnet problems $G(n_S, n_A, b, p)$
with the number of states $n_S \in \{50,100,200\}$, the number of actions
$n_A \in \{2, 5, 10\}$, the branching factor $b \in \{ 1,2,10 \}
\}$ ($b=1$ corresponds to deterministic problems), the number of
features to approximate the value $p=\frac{n_S}{10}$, and the noise
level $\iota=0.1$ ($10\%$).

In addition to  Figure~\ref{expall} that shows the statistics overall for the
all the parameter instances, Figure~\ref{exps}, \ref{expa} and
\ref{expb} display statistics that are respectively conditioned on the
values of $n_S$, $n_A$ and $b$, which gives some insight
on the influence of these parameters.

\end{document}